\pgfplotsset{compat=1.9} 
\newcommand{\combine}{\textsc{COMBINE}\xspace}
\newcommand{\aggregate}{\textsc{AGGREGATE}\xspace}
\newcommand{\readout}{\textsc{READOUT}\xspace}
\newcommand{\gru}{\textsc{Gru}\xspace}
\newcommand{\fc}{\textsc{FC}\xspace}
\DeclareMathOperator*{\maxpool}{Max-Pool}
\DeclareMathOperator*{\concat}{\parallel}
\DeclareMathOperator*{\softmax}{softmax}
\newcommand{\Bmc}{\mathcal{B}}
\newcommand{\Emc}{\mathcal{E}}
\newcommand{\Gmc}{\mathcal{G}}
\newcommand{\Hmc}{\mathcal{H}}
\newcommand{\Mmc}{\mathcal{M}}
\newcommand{\Nmc}{\mathcal{N}}
\newcommand{\Pmc}{\mathcal{P}}
\newcommand{\Smc}{\mathcal{S}}
\newcommand{\Tmc}{\mathcal{T}}
\newcommand{\Vmc}{\mathcal{V}}
\newcommand{\Xmc}{\mathcal{X}}
\newcommand{\ogb}{\textsc{OGBG-CODE}\xspace}
\newcommand{\ogbfif}{\textsc{OGBG-CODE-15}\xspace}
\newcommand{\na}{\textsc{NA}\xspace}
\newcommand{\bn}{\textsc{BN}\xspace}
\newcommand{\ourmodel}{\textsc{DAGNN}\xspace}
\newcommand{\blone}{Node2Token\xspace}
\newcommand{\bltwo}{TargetInGraph\xspace}
\newcommand{\blfou}{MajorityInValid\xspace}
\newcommand{\std}[1]{\tiny{$\pm$#1}}
\theoremstyle{definition} 
\theoremstyle{remark}     \newtheorem{remark}{Remark}
\theoremstyle{remark}     
\theoremstyle{plain}      \newtheorem{theorem}{Theorem}
\theoremstyle{plain}      
\theoremstyle{plain}      
\theoremstyle{plain}      \newtheorem{corollary}[theorem]{Corollary}
\title{Directed Acyclic Graph Neural Networks}
\author{Veronika Thost \& Jie Chen\thanks{To whom correspondence should be addressed.} \\
MIT-IBM Watson AI Lab, IBM Research\\
\texttt{Veronika.Thost@ibm.com}, \texttt{chenjie@us.ibm.com}\\
}
\begin{document}

\maketitle

\begin{abstract}
  Graph-structured data ubiquitously appears in science and engineering. Graph neural networks (GNNs) are designed to exploit the relational inductive bias exhibited in graphs; they have been shown to outperform other forms of neural networks in scenarios where structure information supplements node features. The most common GNN architecture aggregates information from neighborhoods based on message passing. Its generality has made it broadly applicable. In this paper, we focus on a special, yet widely used, type of graphs---DAGs---and inject a stronger inductive bias---partial ordering---into the neural network design. We propose the \emph{directed acyclic graph neural network}, DAGNN, an architecture that processes information according to the flow defined by the partial order. DAGNN can be considered a framework that entails earlier works as special cases (e.g., models for trees and models updating node representations recurrently), but we identify several crucial components that prior architectures lack. We perform comprehensive experiments, including ablation studies, on representative DAG datasets (i.e., source code, neural architectures, and probabilistic graphical models) and demonstrate the superiority of DAGNN over simpler DAG architectures as well as general graph architectures.
\end{abstract}

\section{Introduction}\label{sec:intro}
Graph-structured data is ubiquitous across various disciplines~\citep{GilmerSRVD17-mpnn,ZitnikAL18,physics}. Graph neural networks (GNNs) use both the graph structure and node features to produce a vectorial representation, which can be used for classification, regression~\citep{ogb}, and graph decoding~\citep{deepgmg,ZhangJCGC19-dvae}. Most popular GNNs update node representations through iterative message passing between neighboring nodes, followed by pooling (either flat or hierarchical~\citep{LeeLK19-sagpool,RanjanST20-asap}), to produce a graph representation~\citep{LiTBZ15-ggnn,kip2017-GCN,GilmerSRVD17-mpnn,velickovic2018gat,XuHLJ19-gin}. The relational inductive bias~\citep{SantoroRBMPBL17,Battaglia18-gnnbias,Xuiclr20}---neighborhood aggregation---empowers GNNs to outperform graph-agnostic neural networks. To facilitate subsequent discussions, we formalize a message-passing neural network (MPNN) architecture, which computes representations $h_v^{\ell}$ for all nodes $v$ in a graph $\Gmc$ in every layer $\ell$ and a final graph representation $h_{\Gmc}$, as~\citep{GilmerSRVD17-mpnn}:
\begin{align}
  h_v^{\ell} &= \combine^{\ell}\Big( h_v^{\ell-1}, \, \aggregate^{\ell}\big(\underline{\{h_u^{\ell-1} \mid u\in\Nmc(v)\}}\big) \Big), \quad \ell=1,\ldots,L, \label{eqn:mpnn1} \\
  h_{\Gmc} &= \readout\Big(\{h_v^{L},\, v\in\Vmc\}\Big), \label{eqn:mpnn2}
\end{align}
where $h_v^{0}$ is the input feature of $v$, $\Nmc(v)$ denotes a neighborhood of node $v$ (sometimes including $v$ itself), $\Vmc$ denotes the node set of $\Gmc$, $L$ is the number of layers, and $\aggregate^{\ell}$, $\combine^{\ell}$, and $\readout$ are parameterized neural networks. For notational simplicity, we omit edge attributes; but they can be straightforwardly incorporated into the framework~\eqref{eqn:mpnn1}--\eqref{eqn:mpnn2}.

Directed acyclic graphs (DAGs) are a special type of graphs, yet broadly seen across domains. Examples include parsing results of source code~\citep{AllamanisBDS18}, logical formulas~\citep{crouse2019}, and natural language sentences, as well as probabilistic graphical models~\citep{ZhangJCGC19-dvae}, neural architectures~\citep{ZhangJCGC19-dvae}, and automated planning problems~\citep{Ma2020-ipc}. A directed graph is a DAG if and only if the edges define a partial ordering over the nodes. The partial order is an additionally strong inductive bias one naturally desires to incorporate into the neural network. For example, a neural architecture seen as a DAG defines the acyclic dependency of computation, an important piece of information when comparing architectures and predicting their performance. Hence, this information should be incorporated into the architecture representation for higher predictive power.

In this work, we propose DAGNNs---directed acyclic graph neural networks---that produce a representation for a DAG driven by the partial order. In particular, the order allows for updating node representations based on those of \emph{all} their predecessors sequentially,
such that nodes without successors digest the information of the entire graph. Such a processing manner substantially differs from that of MPNNs where the information landed on a node is limited by a multi-hop local neighborhood and thus restricted by the depth $L$ of the network.

Modulo details to be elaborated in sections that follow, the DAGNN framework reads
\begin{align}
  h_v^{\ell} &= F^{\ell}\Big( h_v^{\ell-1}, \, G^{\ell}\big(\underline{\{h_u^{\ell} \mid u\in\Pmc(v)\}, \, h_v^{\ell-1}}\big) \Big), \quad \ell=1,\ldots,L, \label{eqn:dagnn1} \\
  h_{\Gmc} &= R\Big(\{h_v^{\ell},\, \ell=0,1,\ldots,L,\, v\in\Tmc\}\Big), \label{eqn:dagnn2}
\end{align}
where $\Pmc(v)$ denotes the set of direct predecessors of $v$, $\Tmc$ denotes the set of nodes without (direct) successors, and $G^{\ell}$, $F^{\ell}$, and $R$ are parameterized neural networks that play similar roles to $\aggregate^{\ell}$, $\combine^{\ell}$, and $\readout$, respectively.

A notable difference between~\eqref{eqn:dagnn1}--\eqref{eqn:dagnn2} and~\eqref{eqn:mpnn1}--\eqref{eqn:mpnn2} is that the superscript $\ell-1$ inside the underlined part of~\eqref{eqn:mpnn1} is advanced to $\ell$ in the counterpart in~\eqref{eqn:dagnn1}. In other words, MPNN aggregates neighborhood information from the past layer, whereas DAGNN uses the information in the current layer. An advantage is that DAGNN always uses more recent information to update node representations.

Equations~\eqref{eqn:dagnn1}--\eqref{eqn:dagnn2} outline several other subtle but important differences between DAGNN and MPNNs, such as the use of only direct predecessors for aggregation and the pooling on only nodes without successors. All these differences are unique to the special structure a DAG enjoys.  Exploiting this structure properly should yield a more favorable vectorial representation of the graph. In Section~\ref{sec:dagnn}, we will elaborate the specifics of~\eqref{eqn:dagnn1}--\eqref{eqn:dagnn2}. The technical details include (i) attention for node aggregation, (ii) multiple layers for expressivity, and (iii) topological batching for efficient implementation, all of which yield an instantiation of the DAGNN framework that is state of the art.

For theoretical contributions, we study topological batching and justify that this technique yields maximal parallel concurrency in processing DAGs. Furthermore, we show that the mapping defined by DAGNN is invariant to node permutation and injective under mild assumptions. This result reassures that the graph representation extracted by DAGNN is discriminative.

Because DAGs appear in many different fields, neural architectures for DAGs (including, notably, D-VAE~\citep{ZhangJCGC19-dvae}) or special cases (e.g., trees) are scattered around the literature over the years. Generally, they are less explored compared to MPNNs; and some are rather application-specific. In Section~\ref{sec:related.model}, we unify several representative architectures as special cases of the framework~\eqref{eqn:dagnn1}--\eqref{eqn:dagnn2}. We compare the proposed architecture to them and point out the differences that lead to its superior performance.

In Section~\ref{sec:evaluation}, we detail our comprehensive, empirical evaluation on datasets from three domains: (i) source code parsed to DAGs~\citep{ogb}; (ii) neural architecture search~\citep{ZhangJCGC19-dvae}, where each architecture is
a DAG; and (iii) score-based Bayesian network learning~\citep{ZhangJCGC19-dvae}. We show that DAGNN outperforms many representative DAG architectures and MPNNs.

Overall, this work contributes a specialized graph neural network, a theoretical study of its properties, an analysis of a topological batching technique for enhancing parallel concurrency, a framework interpretation that encompasses prior DAG architectures, and comprehensive evaluations. Supported code is available at \url{https://github.com/vthost/DAGNN}.

\section{The DAGNN Model}\label{sec:dagnn}

\begin{figure}[t]
  \centering
  \includegraphics[width=.8\linewidth]{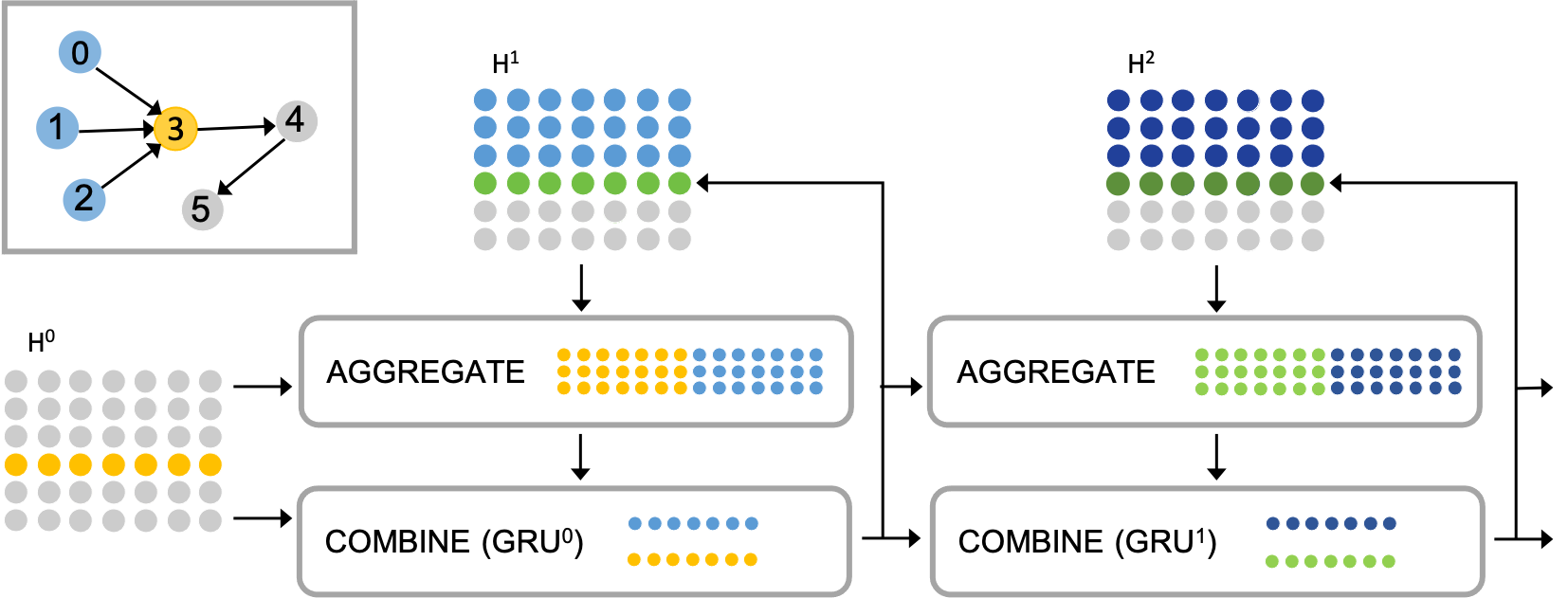}
  \caption{Processing of node $v=3$ (orange). For each layer $\ell$, we collect representations $h_v^{\ell}$ for all nodes $v$ in a matrix $\Hmc^{\ell}$, where each row represents one node. The initial feature matrix is $\Xmc=\Hmc^0$. In the first layer, the representations of the direct predecessors $\Pmc(v)=\{0,1,2\}$ (blue) have been computed; they are aggregated together with the past representation of $v$ (orange) to produce a message. The GRU treats the message as the hidden state and the past representation of $v$ as input and outputs an updated representation for $v$ (green). This new representation will be used by $v$'s direct successors $\{4\}$ in the same layer and also as input to the next layer. Note that the figure illustrates the processing of only one node. In practice, a batch of nodes is processed; see Section~\ref{sec:batching}.}
  \label{fig:dagnn}
\end{figure}

A \emph{DAG} is a directed graph without cycles. Denote by $\Gmc=(\Vmc,\Emc)$ a DAG, where $\Vmc$ and $\Emc\subset\Vmc\times\Vmc$ are the node set and the edge set, respectively. A (strong) \emph{partial order} over a set $S$ is a binary relation $\le$ that is transitive and asymmetric. Some authors use reflexivity versus irreflexivity to distinguish \emph{weak partial order} over \emph{strong partial order}. To unify concepts, we forbid self-loops (which otherwise are considered cycles) in the DAG and mean strong partial order throughout. A set $S$ with partial order $\le$ is called a \emph{poset} and denoted by a tuple $(S,\le)$.

A DAG $(\Vmc,\Emc)$ and a poset $(S,\le)$ are closely related. For any DAG, one can define a unique partial order $\le$ on the node set $\Vmc$, such that for all pairs of elements $u,v\in\Vmc$, $u\le v$ if and only if there is a directed path from $u$ to $v$. On the other hand, for any poset $(S,\le)$, there exists (possibly more than) one DAG that uses $S$ as the node set and that admits a directed path from $u$ to $v$ whenever $u\le v$.

In a DAG, all nodes without (direct) predecessors are called \emph{sources} and we collect them in the set $\Smc$. Similarly, all nodes without (direct) successors are called \emph{targets} and we collect them in the set $\Tmc$. Additionally, we let $\Xmc=\{h_v^0,\, v\in\Vmc\}$ be the set of input node features.

\subsection{Model}
The main idea of DAGNN is to process nodes according to the partial order defined by the DAG. Using the language of MPNN, at every node $v$, we ``aggregate'' information from its neighbors and ``combine'' this aggregated information (the ``message'') with $v$'s information to update the representation of $v$. The main differences
to MPNN are that (i) we use the current-layer, rather than the past-layer, information to compute the current-layer representation of $v$ and that (ii) we aggregate from the direct-predecessor set $\Pmc(v)$ only, rather than the entire (or randomly sampled) neighborhood $\Nmc(v)$. They lead to a straightforward difference in the final ``readout'' also. In the following, we propose an
instantiation of Equations~\eqref{eqn:dagnn1}--\eqref{eqn:dagnn2}. See Figure~\ref{fig:dagnn} for an illustration of the architecture.

\textbf{One layer.}
We use the attention mechanism to instantiate the \emph{aggregate operator}~$G^{\ell}$. For a node $v$ at the $\ell$-th layer, the output message $m_v^{\ell}$ computed by $G^{\ell}$ is a weighted combination of $h_u^{\ell}$ for all nodes $u\in\Pmc(v)$ at the same layer $\ell$:
\begin{equation}\label{eqn:m}
\underbrace{m_v^{\ell}}_{\text{message}} :=
G^{\ell}\Big(\{h_u^{\ell} \mid u\in\Pmc(v)\}, \, h_v^{\ell-1}\Big)
=\sum_{u\in\Pmc(v)} \alpha_{vu}^{\ell}\Big(\underbrace{h_v^{\ell-1}}_{\text{query}}, \, \underbrace{h_u^{\ell}}_{\text{key}}\Big) \underbrace{h_u^{\ell}}_{\text{value}}.
\end{equation}
The weighting coefficients $\alpha_{vu}^{\ell}$ follow the query-key design in usual attention mechanisms, whereby the representation of $v$ in the past layer, $h_v^{\ell-1}$, serves as the query. Specifically, we define
\begin{equation}\label{eqn:alpha}
\alpha_{vu}^{\ell}\Big(h_v^{\ell-1}, \, h_u^{\ell}\Big)
=\softmax_{u\in\Pmc(v)}\Big({w_1^{\ell}}^{\top}h_v^{\ell-1}+{w_2^{\ell}}^{\top}h_u^{\ell}\Big),
\end{equation}
where $w_1^{\ell}$ and $w_2^{\ell}$ are model parameters. We use the additive form, as opposed to the usual dot-product form,%
\footnote{The usual dot-product form reads $\alpha_{vu}^{\ell}(h_v^{\ell-1}, \, h_u^{\ell})
=\softmax(\langle{W_1^{\ell}}^{\top}h_v^{\ell-1},\,{W_2^{\ell}}^{\top}h_u^{\ell}\rangle)$. We find that in practice the dot-product form and the additive form perform rather similarly, but the former  requires substantially more parameters. We are indebted to Hyoungjin Lim who pointed out that, however, in the additive form, the query term will be canceled out inside the softmax computation.}
since it involves fewer parameters. An additional advantage is that it is straightforward to incorporate edge attributes into the model, as will be discussed soon.

The \emph{combine operator}~$F^{\ell}$ combines the message $m_v^{\ell}$ with the previous representation of $v$, $h_v^{\ell-1}$, and produces an updated representation $h_v^{\ell}$. We employ a recurrent architecture, which is usually used for processing data in sequential order but similarly suits processing in partial order: 
\begin{equation}\label{eqn:hv}
h_v^{\ell} = F^{\ell}\Big( h_v^{\ell-1}, \, m_v^{\ell} \Big)
= \gru^{\ell}\Big( \underbrace{h_v^{\ell-1},}_{\text{input}} \, \overbrace{\underbrace{m_v^{\ell} }_{\text{state}}}^{\text{message}}\Big),
\end{equation}
where $h_v^{\ell-1}$, $m_v^{\ell}$, and $h_v^{\ell}$ are treated as the input, past state, and updated state/output of a GRU, respectively. This design differs from most MPNNs that use simple summation or concatenation to combine the representations. It further differs from GG-NN \citep{LiTBZ15-ggnn} (which also employs a GRU), wherein the roles of the two arguments are switched. In GG-NN, the message is treated as the input and the node representation is treated as the state. In contrast, we start from node features and naturally use them as inputs. The message tracks the processed part of the graph and serves better the role of a hidden state, being recurrently updated.

By convention, we define $G^{\ell}(\emptyset,\cdot)=0$ for the aggregator so that for nodes with an empty direct-predecessor set, the message (or, equivalently, the initial state of the GRU) is zero.

\textbf{Bidirectional processing.}
Just like in sequence models where a sequence may be processed by either the natural order or the reversed order, we optionally invert the directions of the edges in $\Gmc$ to create a \emph{reverse DAG} $\widetilde{\Gmc}$. We will use the tilde notation for all terms related to the reverse DAG. For example, the representation of node $v$ in $\widetilde{\Gmc}$ at the $\ell$-th layer is denoted by $\widetilde{h}_v^{\ell}$.

\textbf{Readout.}
After $L$ layers of (bidirectional) processing, we use the computed node representations to produce the graph representation. We follow a common practice---concatenate the representations across layers, perform a max-pooling across nodes, and apply a fully-connected layer to produce the output. Different from the usual practice, however, we pull across only the target nodes and concatenate the pooling results from the two directions. Recall that the target nodes contain information of the entire graph following the partial order.
Mathematically, the readout $R$ produces
\begin{equation}\label{eqn:hG}
h_{\Gmc} = \fc\Big( \maxpool_{v\in\Tmc}\big(\concat_{\ell=0}^Lh_v^{\ell}\big)
\,\,\concat\,\,
\maxpool_{u\in\Smc}\big(\concat_{\ell=0}^L\widetilde{h}_u^{\ell}\big) \Big).
\end{equation}
Note that the target set $\widetilde{\Tmc}$ of $\widetilde{\Gmc}$ is the same as the source set $\Smc$ of $\Gmc$. If the processing is unidirectional, the right pooling in~\eqref{eqn:hG} is dropped.

\textbf{Edge attributes.}
The instantiation of the framework so far has not considered edge attributes. It is in fact simple to incorporate them. Let $\tau(u,v)$ be the type of an edge $(u,v)$ and let $y_{\tau}$ be a representation of edges of type $\tau$. We insert this information during message calculation in the aggregator. Specifically, we replace the attention weights $\alpha_{vu}^{\ell}$ defined in~\eqref{eqn:alpha} by
\begin{equation}\label{eqn:alpha.edge}
\alpha_{vu}^{\ell}\Big(h_v^{\ell-1}, \, h_u^{\ell}\Big)
=\softmax_{u\in\Pmc(v)}\Big({w_1^{\ell}}^{\top}h_v^{\ell-1}+{w_2^{\ell}}^{\top}h_u^{\ell}+{w_3^{\ell}}^{\top}y_{\tau(u,v)}\Big).
\end{equation}
In practice, we experiment with slightly fewer parameters by setting $w_3^{\ell}=w_1^{\ell}$ and find that the model performs equally well. The edge representations $y_{\tau}$ are trainable embeddings of the model. Alternatively, if input edge features are provided, $y_{\tau(u,v)}$ can be replaced by a neural network-transformed embedding for the edge $(u,v)$.

\subsection{Topological Batching}\label{sec:batching}

A key difference to MPNN is that DAGNN processes nodes sequentially owing to the nature of the aggregator $G^{\ell}$, obeying the partial order. Thus, for computational efficiency, it is important to maximally exploit concurrency so as to better leverage parallel computing resources (e.g., GPUs). One observation is that nodes without dependency may be grouped together and processed concurrently, if their predecessors have all been processed. See Figure~\ref{fig:batching} for an illustration.

To materialize this idea, we consider \emph{topological batching},
which partitions the node set $\Vmc$ into ordered batches $\{\Bmc_i\}_{i\ge 0}$ 
so that
(i) the $\Bmc_i$'s are disjoint and their union is $\Vmc$;
(ii) for every pair of nodes $u,v\in\Bmc_i$ for some $i$, there is not a directed path from $u$ to $v$ or from $v$ to $u$;
(iii) for every $i>0$, there exists one node in $\Bmc_i$ such that it is the tail of an edge whose head is in $\Bmc_{i-1}$.
The concept was propsoed by \citet{crouse2019};%
\footnote{See also an earlier implementation in \url{https://github.com/unbounce/pytorch-tree-lstm}}
in what follows, we derive several properties that legitimizes its use in our setting. First, topological batching produces the minimum number of sequential batches such that all nodes in each batch can be processed in parallel.

\begin{theorem}\label{thm:batching}
  The number of batches from a partitioning that satisfies (i)--(iii) described in the preceding paragraph is equal to the number of nodes in the longest path of the DAG. As a consequence, this partitioning produces the minimum number of ordered batches such that for all $u\le v$, if $u\in\Bmc_i$ and $v\in\Bmc_j$, then $i<j$. Note that the partial order $\le$ is defined at the beginning of Section~\ref{sec:dagnn}.
\end{theorem}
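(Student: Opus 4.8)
The plan is to pin the number of batches $N$ between two copies of the quantity $D$, the number of nodes on a longest directed path of $\Gmc$, and then read off the minimality claim. Throughout, $u\le v$ is the partial order of Section~\ref{sec:dagnn}, so that the nodes of any directed path form a chain (they are pairwise comparable, since a sub-path connects any two of them).

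\textbf{Lower bound $N\ge D$.} Fix a longest path $v_1\to v_2\to\dots\to v_D$. For $a<b$ there is a directed sub-path from $v_a$ to $v_b$, hence $v_a\le v_b$, so $v_a$ and $v_b$ are comparable; by property (ii) they cannot lie in the same batch. Thus $v_1,\dots,v_D$ occupy $D$ distinct batches and $N\ge D$.

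\textbf{Upper bound $N\le D$.} This is the direction where property (iii) does the work, and the step I expect to be the main obstacle. First I would promote (iii) to a per-node statement about the partition that topological batching produces: \emph{every} node $v\in\Bmc_i$ with $i\ge 1$ has a direct predecessor in $\Bmc_{i-1}$ (a node is placed in the earliest batch all of whose predecessors are already scheduled, so one with no predecessor in the immediately preceding batch would have been scheduled earlier). Granting this, I prove by induction on $i$ two statements at once: (a) every $v\in\Bmc_i$ is the last node of a directed path with at least $i+1$ nodes; (b) every directed path ending at $v\in\Bmc_i$ has at most $i+1$ nodes. For (a), chain a predecessor of $v$ lying in $\Bmc_{i-1}$ onto a witnessing path supplied by the inductive hypothesis. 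For (b), if $w\to v$ is an edge with $w\in\Bmc_k$ and $k\ge i$, then descending through direct predecessors of $w$ — which lie successively in $\Bmc_{k-1},\Bmc_{k-2},\dots$ by the per-node form of (iii) — reaches a node in $\Bmc_i$ that is $\le v$, contradicting (ii); hence $k\le i-1$, and the inductive hypothesis bounds the length of any path ending at $w$. Applying (a) to any node of the last nonempty batch $\Bmc_{N-1}$ produces a path with $\ge N$ nodes, so $D\ge N$, and with the lower bound $N=D$.

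\textbf{Order-respect and minimality.} Statement (b) says the batch index of a node equals the number of nodes on its longest incoming path minus one; since prepending a directed path strictly increases that count, $u\le v$ with $u\in\Bmc_i$, $v\in\Bmc_j$ forces $i<j$, so the partition respects the partial order. Finally, let $\{\Bmc'_k\}$ be any partition into ordered batches with this ordering property; along the longest path $v_a\le v_{a+1}$ forces the indices to increase strictly, so the $v_a$'s lie in $D$ distinct batches and $\{\Bmc'_k\}$ has at least $D$ batches. Our partition attains exactly $D$ and has the ordering property, so it is a minimizer, which is the stated consequence. The delicate point, as flagged, is the promotion of (iii) to its per-node form; if one wants to argue purely from (i)--(iii) one should first show that any partition satisfying them is necessarily the ``as early as possible'' layering, after which the argument above goes through verbatim.
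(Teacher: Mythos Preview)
Your argument follows the same skeleton as the paper's: a pigeonhole lower bound along a longest path, and an upper bound obtained by exhibiting a directed path that threads through every batch. The paper compresses the upper bound into a single sentence---``according to Property (iii), one may trace a directed path, one node from each batch, starting from the last one''---which is exactly your statement (a), un-inductivized; your statement (b) and the explicit order-respect and minimality arguments spell out what the paper dismisses as ``straightforwardly follows.'' So the route is the same, but you have supplied the details the paper omits.

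You are right to flag the promotion of (iii) to its per-node form as the delicate point: the paper's backward trace silently requires it too, since the existential form only guarantees \emph{some} node in $\Bmc_i$ with a predecessor in $\Bmc_{i-1}$, not that one may continue backward from a node already chosen. Your parenthetical justification via the greedy layering procedure is the correct fix. However, your closing suggestion---that one could argue purely from (i)--(iii) by first showing any such partition must be the as-early-as-possible layering---does not work. On the four-node DAG with edges $a\to b$ and $c\to d$, the partition $\Bmc_0=\{a\}$, $\Bmc_1=\{b,c\}$, $\Bmc_2=\{d\}$ satisfies (i), (ii), and the existential (iii) (via $b$ for $i=1$ and $d$ for $i=2$), yet has three batches while every directed path has two nodes. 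So the existential (iii) is genuinely too weak for the conclusion, and both your proof and the paper's should be read under the per-node strengthening you derived from the procedure.
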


The partitioning procedure may be as follows. All nodes without direct predecessors, $\Smc$, form the initial batch. Iteratively, remove the batch just formed from the graph, as well as the edges emitting from these nodes. The nodes without direct predecessors in the remaining graph form the next batch.

\begin{remark}
  To satisfy Properties (i)--(iii), it is not necessary that $\Bmc_0=\Smc$; but the above procedure achieves so. Applying this procedure on the reverse DAG $\widetilde{\Gmc}$, we obtain $\widetilde{\Bmc}_0=\Tmc$. Note that the last batch for $\Gmc$ may not be the same as $\Tmc$; and the last batch for $\widetilde{\Gmc}$ may not be the same as $\Smc$ either. 
\end{remark}

\begin{figure}[t]
  \centering
  \includegraphics[width=.35\linewidth]{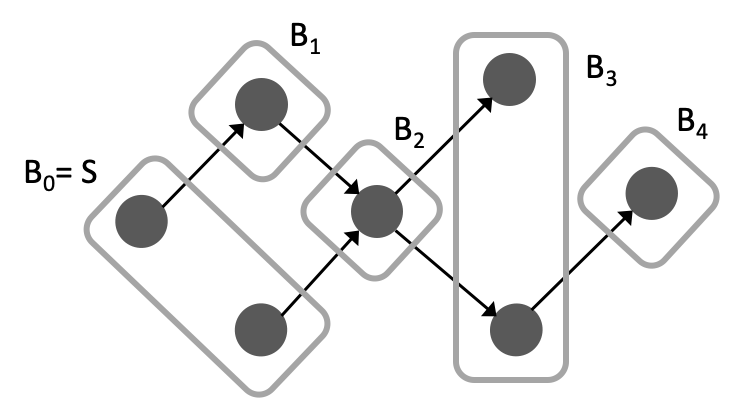}\qquad%
  \includegraphics[width=.31\linewidth]{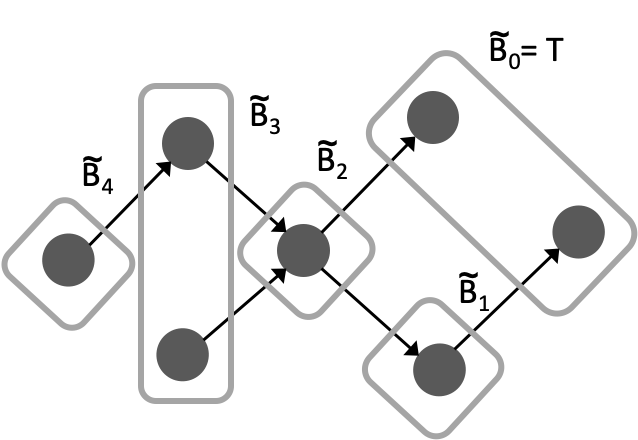}
  \caption{Topological batching. Left: for the original graph $\Gmc$; right: for the reverse graph $\widetilde{\Gmc}$.}
  \label{fig:batching}
\end{figure}

\begin{remark}
  Topological batching can be straightforwardly extended to multiple graphs for better parallel concurrency: one merges the $\Bmc_i$ for the same $i$ across graphs into a single batch. This is equivalent to treating the multiple DAGs as a single (albeit disconnected) DAG and applying topological batching on it.
\end{remark}

\subsection{Properties}
In the following, we summarize properties of the DAGNN model; they are consistent with the corresponding results for MPNNs. To formalize these results, we let $\Mmc: \Vmc \times \Emc \times \Xmc \to h_{\Gmc}$ denote the mapping defined by Equations~\eqref{eqn:dagnn1}--\eqref{eqn:dagnn2}. For notational consistency, we omit bidirectional processing, and thus ignore the tilde term in~\eqref{eqn:hG}. The first results state that DAGNN produces the same graph representation invariant to node permutation.

\begin{theorem}\label{thm:invariance}
  The graph representation $h_{\Gmc}$
  is invariant to node indexing if all $G^{\ell}$, $F^{\ell}$, and $R$ are so. 
\end{theorem}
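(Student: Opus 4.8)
The plan is to fix an arbitrary permutation $\pi$ of $\Vmc$, form the relabeled DAG $\Gmc_\pi=(\Vmc,\Emc_\pi)$ with $(\pi(u),\pi(v))\in\Emc_\pi$ iff $(u,v)\in\Emc$ and input feature $h_{\pi(v)}^0:=h_v^0$, and show that running DAGNN on $\Gmc_\pi$ yields exactly the same graph vector as running it on $\Gmc$. Since $\pi$ is a graph isomorphism, it transports the combinatorial data the model uses: it preserves direct-predecessor sets, $\Pmc_{\Gmc_\pi}(\pi(v))=\pi(\Pmc_\Gmc(v))$; it preserves the target set, $\Tmc_{\Gmc_\pi}=\pi(\Tmc_\Gmc)$; and it preserves the partial order, $u\le v$ in $\Gmc$ iff $\pi(u)\le\pi(v)$ in $\Gmc_\pi$ (in particular $\Gmc_\pi$ is again acyclic).

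\textbf{Main claim.} For every $\ell\in\{0,1,\dots,L\}$ and every $v\in\Vmc$, the representation $h_v^\ell$ computed by DAGNN on $\Gmc$ equals the representation $h_{\pi(v)}^\ell$ computed on $\Gmc_\pi$. I would prove this by induction on $\ell$ and, for fixed $\ell$, by a nested induction along the partial order $\le$ — legitimate because $\le$ is a well-founded order on a finite acyclic graph (equivalently, one processes the topological batches $\Bmc_0,\Bmc_1,\dots$ of Theorem~\ref{thm:batching} in sequence). The base case $\ell=0$ is just the definition of the relabeled features. In the inductive step, assume $h_v^{\ell-1}=h_{\pi(v)}^{\ell-1}$ for all $v$ (outer hypothesis) and walk up $\le$. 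If $\Pmc(v)=\emptyset$, the convention $G^\ell(\emptyset,\cdot)=0$ gives $m_v^\ell=0=m_{\pi(v)}^\ell$, hence $h_v^\ell=F^\ell(h_v^{\ell-1},0)=F^\ell(h_{\pi(v)}^{\ell-1},0)=h_{\pi(v)}^\ell$. Otherwise every $u\in\Pmc(v)$ satisfies $u\le v$, so the nested hypothesis gives $h_u^\ell=h_{\pi(u)}^\ell$; thus the multiset $\{h_u^\ell\mid u\in\Pmc_\Gmc(v)\}$ equals the multiset $\{h_w^\ell\mid w\in\Pmc_{\Gmc_\pi}(\pi(v))\}$, while the query satisfies $h_v^{\ell-1}=h_{\pi(v)}^{\ell-1}$. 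Because $G^\ell$ is invariant to node indexing — its value depends on the predecessor representations only as an unordered (multi)set, together with the query — we get $m_v^\ell=m_{\pi(v)}^\ell$, and applying $F^\ell$ yields $h_v^\ell=h_{\pi(v)}^\ell$. This closes both inductions.

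\textbf{Readout and the hard part.} By the claim, the collection $\{h_v^\ell\mid \ell=0,\dots,L,\ v\in\Tmc_\Gmc\}$ fed to $R$ in $\Gmc$ is, after applying $\pi$, literally the collection fed to $R$ in $\Gmc_\pi$; since $R$ is invariant to node indexing, $h_\Gmc=h_{\Gmc_\pi}$, as asserted. I expect the only real subtlety to be making the nested induction airtight: one must observe that the within-layer recursion defining $h_v^\ell$ from $\{h_u^\ell\mid u\in\Pmc(v)\}$ is well-founded precisely because the graph is acyclic (this is where Theorem~\ref{thm:batching} can be invoked to make the order explicit and the recursion manifestly terminating), and one must use the precise meaning of ``$G^\ell$ invariant to node indexing,'' namely that it ingests its first argument as a multiset rather than as an indexed tuple. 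Everything else — transporting $\Pmc(\cdot)$, $\Tmc$, and $\le$ across the isomorphism — is routine bookkeeping.
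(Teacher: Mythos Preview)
Your proposal is correct and follows essentially the same approach as the paper: a double induction, outer on the layer index $\ell$ and inner along the topological order (the paper phrases the inner induction via the batches $\Bmc_0,\Bmc_1,\dots$, you via the partial order $\le$, which is equivalent), followed by applying the invariance of $R$ on the target set. Your explicit setup with the permutation $\pi$ and the relabeled graph $\Gmc_\pi$ is somewhat more formal than the paper's terse phrasing ``$h_v^\ell$ is invariant to the indexing of $v$,'' but the skeleton of the argument is the same.
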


\begin{corollary}\label{cor:invariance}
  The functions $G^{\ell}$, $F^{\ell}$, and $R$ defined in~\eqref{eqn:m}--\eqref{eqn:hG} are invariant to node indexing. Hence, the resulting graph representation $h_{\Gmc}$ is, too.
\end{corollary}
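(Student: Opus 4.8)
By Theorem~\ref{thm:invariance}, it suffices to show that the three operators $G^{\ell}$, $F^{\ell}$ and $R$ as instantiated in~\eqref{eqn:m}--\eqref{eqn:hG} are each invariant to node indexing; the invariance of $h_{\Gmc}$ then follows at once. The plan is to unwind each definition and check that it is a genuine function of unordered (multi)sets of node representations --- that is, that no step secretly relies on a particular enumeration of the nodes. I expect the bulk of the argument to be routine commutativity/symmetry observations, and the only point warranting care to be confirming that the softmax normalization in~\eqref{eqn:alpha} and the max-pooling in~\eqref{eqn:hG} are bona fide set functions.

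First I would treat $G^{\ell}$. By~\eqref{eqn:m}, the message is $m_v^{\ell}=\sum_{u\in\Pmc(v)}\alpha_{vu}^{\ell}(h_v^{\ell-1},h_u^{\ell})\,h_u^{\ell}$, and by~\eqref{eqn:alpha} each coefficient $\alpha_{vu}^{\ell}$ is a softmax over $u\in\Pmc(v)$: its denominator is a sum indexed by the \emph{set} $\Pmc(v)$ and hence independent of how that set is enumerated, while its numerator depends only on the pair $(h_v^{\ell-1},h_u^{\ell})$. So each $\alpha_{vu}^{\ell}$ is determined by the query $h_v^{\ell-1}$ together with the multiset $\{h_w^{\ell}\mid w\in\Pmc(v)\}$, and the outer sum over $u$ is commutative; thus $m_v^{\ell}$ depends only on $h_v^{\ell-1}$ and that multiset. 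With edge attributes, $\alpha_{vu}^{\ell}$ is given instead by~\eqref{eqn:alpha.edge}, and the same argument goes through since $y_{\tau(u,v)}$ is attached to the edge $(u,v)$ rather than to an index of $u$; then $G^{\ell}$ is a function of the multiset $\{(h_u^{\ell},y_{\tau(u,v)})\mid u\in\Pmc(v)\}$ and the query. The convention $G^{\ell}(\emptyset,\cdot)=0$ is likewise index-free. Hence $G^{\ell}$ is invariant to node indexing.

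Next I would dispatch $F^{\ell}$ and $R$. By~\eqref{eqn:hv}, $F^{\ell}(h_v^{\ell-1},m_v^{\ell})=\gru^{\ell}(h_v^{\ell-1},m_v^{\ell})$ takes only the two vectors $h_v^{\ell-1}$ and $m_v^{\ell}$ and involves no collection of nodes, so it is trivially invariant. For $R$, I would use~\eqref{eqn:hG} with the bidirectional tilde term dropped (as stipulated just before Theorem~\ref{thm:invariance}): $R$ first forms, for each target $v\in\Tmc$ separately, the layer-wise concatenation $\concat_{\ell=0}^L h_v^{\ell}$ over the fixed layer index set $\{0,1,\ldots,L\}$ --- a per-node operation, so no dependence on node indexing enters --- then applies $\maxpool_{v\in\Tmc}$, a coordinatewise maximum over the multiset $\{\concat_{\ell=0}^L h_v^{\ell}\mid v\in\Tmc\}$ (the maximum of finitely many reals is symmetric and requires no tie-breaking, so this is a set function), and finally the fixed affine map $\fc$. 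If bidirectional processing is retained, the identical reasoning covers $\maxpool_{u\in\Smc}$ over $\widetilde{h}_u^{\ell}$ together with the outer concatenation of the two fixed-order pooled vectors. Hence $R$ is invariant to node indexing.

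Having verified the hypotheses, I would conclude by invoking Theorem~\ref{thm:invariance} to obtain that $h_{\Gmc}$ is invariant to node indexing. I do not anticipate any substantive obstacle: the only non-mechanical observations are that the softmax denominator in~\eqref{eqn:alpha}/\eqref{eqn:alpha.edge} and the max-pool in~\eqref{eqn:hG} are reductions over node sets and hence order-free, while everything else reduces to the commutativity of addition and to the fact that $\gru^{\ell}$ and $\fc$ simply do not take node collections as inputs. The real work is in stating the reduction to Theorem~\ref{thm:invariance} cleanly and in naming the relevant symmetric operations precisely.
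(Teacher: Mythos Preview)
Your proposal is correct and follows essentially the same approach as the paper: verify separately that $G^{\ell}$, $F^{\ell}$, and $R$ as instantiated in~\eqref{eqn:m}--\eqref{eqn:hG} are invariant to node indexing, then invoke Theorem~\ref{thm:invariance}. Your treatment is in fact more careful than the paper's terse proof --- you explicitly note that the softmax denominator is a set-indexed sum and that max-pooling is a symmetric reduction, whereas the paper merely observes that $G^{\ell}$ is a weighted sum with shared parameter $w_2^{\ell}$, that $F^{\ell}$ has ``clearly distinguished'' arguments, and that $R$ pools over a fixed set.
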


The next result states that the framework will not produce the same graph representation for different graphs (i.e., non-isomorphic graphs), under a common condition.

\begin{theorem}\label{thm:injection}
  The mapping $\Mmc$ is injective if $G^{\ell}$, $F^{\ell}$, and $R$, considered as multiset functions, are so.
\end{theorem}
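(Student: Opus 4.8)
The plan is to establish the contrapositive, at the level of isomorphism classes: I will show that if $\Mmc$ sends two attributed DAGs $\Gmc_1=(\Vmc_1,\Emc_1,\Xmc_1)$ and $\Gmc_2=(\Vmc_2,\Emc_2,\Xmc_2)$ to the same representation $h_{\Gmc}$, then $\Gmc_1$ and $\Gmc_2$ are isomorphic as attributed DAGs; by Theorem~\ref{thm:invariance} this is the strongest form of ``injective'' that can possibly hold. The argument splits into three stages. First I strip off the readout: reading $R$ as a function of the multiset $\{(h_v^0,\dots,h_v^L)\mid v\in\Tmc\}$ of target \emph{histories} -- consistent with the instantiation~\eqref{eqn:hG}, which concatenates the layers at each target before pooling over $\Tmc$ -- injectivity of $R$ as a multiset function forces this multiset of histories to be the same for $\Gmc_1$ and $\Gmc_2$.

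The core is the second stage: an induction along the topological batches $\{\Bmc_i\}$ (well founded by Theorem~\ref{thm:batching}) proving that the history $(h_v^0,\dots,h_v^L)$ of a node determines, up to isomorphism, the attributed sub-DAG $\Gmc_{\le v}$ induced on $\{v\}\cup\{u\mid u\le v\}$. The base case is a source $v$: since $G^{\ell}(\emptyset,\cdot)=0$, the history is a deterministic function of $h_v^0$, and $\Gmc_{\le v}$ is the single featured node. For the inductive step, let $v_1$ in $\Gmc_1$ and $v_2$ in $\Gmc_2$ lie in batches of index $\le i{+}1$ with equal histories. Injectivity of each $F^{\ell}$ as a function of the pair $(h_v^{\ell-1},m_v^{\ell})$ recovers all messages $m_{v_1}^{\ell}=m_{v_2}^{\ell}$ from the histories; injectivity of each $G^{\ell}$ as a multiset function then recovers, for every $\ell$, the multiset $\{h_u^{\ell}\mid u\in\Pmc(v_1)\}=\{h_u^{\ell}\mid u\in\Pmc(v_2)\}$. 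Assembling these per-layer multisets into one multiset of predecessor histories (the delicate step, discussed below), I invoke the induction hypothesis on each predecessor $u$, glue the resulting isomorphisms of the $\Gmc_{\le u}$'s along their shared sub-DAGs, and attach $v$ with feature $h_v^0$ and an incoming edge from each predecessor, obtaining that $\Gmc_{\le v_1}$ and $\Gmc_{\le v_2}$ are isomorphic; the induction is legitimate because every predecessor sits in a strictly earlier batch. The third stage is the global gluing: in a finite DAG every node and every edge lies in $\Gmc_{\le t}$ for some target $t$, so matching the target histories (stage one) and the isomorphisms of the $\Gmc_{\le t}$'s (stage two), and checking these agree on overlaps -- which they do, since each $\Gmc_{\le u}$ is reconstructed canonically from $u$'s shared history -- yields $\Gmc_1\cong\Gmc_2$.

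The step I expect to be the main obstacle is the assembly inside the inductive step, intertwined with exactly what ``the history determines $\Gmc_{\le v}$'' can mean. Per-layer multiset equalities $\{h_u^{\ell}\}_{u\in\Pmc(v_1)}=\{h_u^{\ell}\}_{u\in\Pmc(v_2)}$ do not by themselves determine the multiset of history tuples (a multiset of tuples is not recoverable from its coordinate marginals); the clean fix is to carry the \emph{stronger} inductive hypothesis that $u\mapsto(h_u^0,\dots,h_u^L)$ is injective over all nodes in batches $\le i$, so that predecessors can be matched by their ancestor sub-DAGs -- equivalently, to read the first argument of $G^{\ell}$ as the multiset of predecessor histories, which I take to be the intended meaning of ``considered as multiset functions.'' A second, genuinely structural subtlety is that an injective multiset aggregator still collapses two distinct predecessors whose attributed ancestor sub-DAGs happen to be isomorphic, so the object reconstructed at $v$ equals $\Gmc_{\le v}$ only up to the equivalence merging such nodes; the reconstruction is faithful precisely when $\Gmc$ is ``reduced'' (no two nodes share an attributed ancestor sub-DAG -- automatic, for instance, whenever all node features are distinct), which here plays the role that Weisfeiler--Lehman distinguishability plays for message-passing GNNs. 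I would state that hypothesis explicitly and, without it, prove injectivity on the corresponding quotient. The remaining pieces -- peeling off $F^{\ell}$, inverting $G^{\ell}$, and the gluing bookkeeping -- are routine given the injectivity hypotheses, which are themselves realizable because the relevant domains are countable (the standard construction behind the ``mild assumptions'').
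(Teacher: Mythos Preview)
Your approach is the same as the paper's: peel off $R$ to get equality of the target-node representations across layers, then use injectivity of $F^{\ell}$ and $G^{\ell}$ to induct backward through the DAG and recover the predecessor structure together with the input features. The paper's proof is far terser---essentially two short paragraphs---and does not separate the three stages or speak of isomorphism classes; it simply asserts that from $R$ one recovers ``the same target set $\Tmc$ and same $h_v^{\ell}$,'' and from $F^L,G^L$ one recovers ``the same set of direct predecessors $\Pmc(v)$,'' then backward-inducts.

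The subtleties you flag are real and are not addressed in the paper's proof. In particular, the paper silently identifies nodes with their representations (writing ``same target set'' and ``same set of direct predecessors'' where only equality of \emph{multisets of representations} is actually obtained), which is precisely the gap you close with the stronger inductive hypothesis / the ``reduced'' assumption. Likewise the paper never confronts the assembly of per-layer predecessor multisets into a multiset of predecessor histories; it only uses the last-layer multiset $\{h_u^L\mid u\in\Pmc(v)\}$ and then separately backward-reduces $h_v^{\ell}\to h_v^{\ell-1}$ via $F^{\ell}$, which sidesteps (but does not resolve) the issue. So your write-up is a more rigorous execution of the same argument, and your proposed ``reduced DAG'' caveat (or distinct node features) is exactly the implicit assumption under which the paper's shortcut becomes valid.
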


The condition required by Theorem~\ref{thm:injection} is not restrictive. There exist (infinitely many) injective multiset functions $G^{\ell}$, $F^{\ell}$, and $R$, although the ones instantiated by~\eqref{eqn:m}--\eqref{eqn:hG} are not necessarily injective. The modification to injection can be done by using the $\epsilon$-trick applied in GIN~\citep{XuHLJ19-gin}, but, similar to the referenced work, the $\epsilon$ that ensures injection is unknown. In practice, it is either set as zero or treated as a tunable hyperparameter.

\section{Comparison to Related Models}\label{sec:related.model}
In this section, we compare to the most closely related architectures for DAGs, including trees.
Natural language processing is a major source of these architectures, since semantic parsing forms a rooted tree or a DAG. Recently, D-VAE \citep{ZhangJCGC19-dvae} has been suggested as a general-purpose autoencoder for DAGs. Its encoder architecture is the most similar one to ours, but we highlight notable differences that support the improvement DAGNN gains over the D-VAE encoder. All the models we compare with may be considered as restricted cases of the framework~\eqref{eqn:dagnn1}--\eqref{eqn:dagnn2}. 

Rooted trees do usually not come with directed edges, because either direction (top-down or bottom-up) is sensible. Hence, we use the terminology ``parent'' and ``child'' instead. Unified under our framework, recursive neural networks tailored to trees \citep{SocherLNM11-recursive,SocherHMN12,SocherPWCMNP13,EbrahimiD15} are applied to a fixed number of children when the aggregator acts on a concatenation of the child representations. Moreover, they assume that internal nodes do not come with input representations and hence the combine operator misses the first argument.

Tree-LSTM~\citep{TaiSM15-treelstm,ZhuSG15-treelstm,ZhangLL16-treelstm,KiperwasserG16a-treelstm} and DAG-RNN~\citep{ShuaiZWW-CVPR16}, like DAGNN, employ a recurrent architecture as the combine operator, but the message (hidden state) therein is a naive sum or element-wise product of child representations. In a variant of Tree-LSTM, the naive sum is replaced by a sum of child representations multiplied by separate weight matrices. A limitation of this variant is that the number of children must be the same and the children must be ordered. Another limitation is that both architectures assume that there is a single terminal node (in which case a readout is not invoked).

The most similar architecture to DAGNN is the encoder of D-VAE. There are two notable differences. First, D-VAE uses the gated sum as aggregator but we use attention which leverages the information of not only the summands ($h_u^{\ell}$) but also that of the node under consideration ($h_v^{\ell-1}$). This additional source of information enables attention driven by external factors and improves over self attention. Second, similar to all the aforementioned models, D-VAE does not come with a layer notion. On the contrary, we use multiple layers, which are more natural and powerful in the light of findings about general GNNs. Our empirical results described in the following section confirm so.

\section{Evaluation}
\label{sec:evaluation}
In this section, we demonstrate the effectiveness of DAGNN on multiple datasets and tasks over a comprehensive list of baselines. We compare timing and show that the training cost of DAGNN is comparable with that of other DAG architectures. We also conduct ablation studies to verify the importance of its components, which prior DAG architectures lack.

\subsection{Datasets, Tasks, Metrics, and Baselines}
The \textbf{\ogb} dataset \citep{ogb} contains 452,741 Python functions parsed into DAGs. We consider the \textbf{TOK} task, predicting the tokens that form the function name; it is included in the Open Graph Benchmark (OGB). Additionally, we introduce the \textbf{LP} task, predicting the length of the longest path of the DAG. The metric for TOK is the F1 score and that for LP is accuracy. Because of the vast size, we also create a 15\% training subset, \textbf{\ogbfif}, for similar experiments.

For this dataset, we consider three basic baselines and several GNN models for comparison. For the TOK task, the \textbf{\blone} baseline predicts tokens from the attributes of the second graph node, while the \textbf{\bltwo} baseline predicts tokens that appear in both the ground truth and in the attributes of some graph node. These baselines exploit the fact that the tokens form node attributes and that the second node's attribute contains the function name if it is part of the vocabulary. For the LP task, the \textbf{\blfou} baseline constantly predicts the majority length seen from the validation set. The considered GNN models include four from OGB: \textbf{GCN} \citep{kip2017-GCN}, \textbf{GIN} \citep{XuHLJ19-gin}, \textbf{GCN-VN}, \textbf{GIN-VN} (where -VN means adding a virtual node connecting all existing nodes); two using attention/gated-sum mechanisms: \textbf{GAT} \citep{velickovic2018gat}, \textbf{GG-NN} \citep{LiTBZ15-ggnn}; two hierarchical pooling approaches using attention: \textbf{SAGPool} \citep{LeeLK19-sagpool}, \textbf{ASAP} \citep{RanjanST20-asap}; and the \textbf{D-VAE encoder}.

The \textbf{\na} dataset \citep{ZhangJCGC19-dvae} contains 19,020 neural architectures generated by the ENAS software. The task is to predict the architecture performance on CIFAR-10 under the weight-sharing scheme. Since it is a regression task, the metrics are RMSE and Pearson's $r$. To gauge performance with \citet{ZhangJCGC19-dvae}, we similarly train (unsupervised) autoencoders and use sparse Gaussian process regression on the latent representation to predict the architecture performance. DAGNN serves as the encoder and we pair it with an adaptation of the D-VAE decoder (see Appendix~\ref{app:config}). We compare to \textbf{D-VAE} and all the autoencoders compared therein: \textbf{S-VAE} \citep{svae}, \textbf{GraphRNN} \citep{grnn}, \textbf{GCN} \citep{ZhangJCGC19-dvae}, and \textbf{DeepGMG} \citep{deepgmg}.

The \textbf{\bn} dataset \citep{ZhangJCGC19-dvae} contains 200,000 Bayesian networks generated by using the R package bnlearn. The task is to predict the BIC score that measures how well a BN fits the Asia dataset \citep{asia}. We use the same metrics and baselines as for \na.


\subsection{Results and Discussion}


\begin{table}[t]
\caption{Prediction performance on the full dataset \ogb and a 15\% subset \ogbfif for two tasks: TOK and LP. \textbf{Best results are boldfaced} and \underline{second best are underlined}.}
\label{tab:results-ogb}
\centering
\begin{small}
\begin{tabular}{lcccc}
\toprule
&\multicolumn{1}{c}{TOK}&\multicolumn{1}{c}{TOK-15}&\multicolumn{1}{c}{LP}&\multicolumn{1}{c}{LP-15}\\
\cmidrule(lr){2-2}\cmidrule(lr){3-3}\cmidrule(lr){4-4}\cmidrule(lr){5-5}
Model & F1 $\uparrow$ & F1 $\uparrow$ & Acc $\uparrow$ & Acc $\uparrow$\\
\midrule
\blone
&13.04\std{0.00}
&13.04\std{0.00}
&-&-\\
\bltwo
&27.32\std{0.00}
&27.08\std{0.00}
&-&-\\
\blfou
&-&-
&22.66\std{0.00}
&22.66\std{0.00}\\
\midrule
GCN
&31.63\std{0.18}
&24.39\std{0.40}
&95.55\std{0.62}
&90.66\std{2.00}\\
GCN-VN
&32.63\std{0.13}
&24.44\std{0.25}
&96.62\std{0.44}
&92.87\std{1.19}\\
GIN
&31.63\std{0.20}
&21.49\std{0.61}
&98.36\std{0.32}
&92.53\std{2.30}\\
GIN-VN
&32.04\std{0.18}
&21.10\std{0.61}
&98.60\std{0.23}
&93.27\std{2.53}\\
GAT
&\underline{33.59\std{0.32}}
&\underline{27.37\std{0.16}} 
&93.71\std{0.24}
&83.15\std{1.34}\\
GG-NN
&28.04\std{0.27}
&23.15\std{0.49}
&96.48\std{0.27}
&89.16\std{2.31}\\
SAGPool 
&31.88\std{0.39}
&24.45\std{0.77}
&~72.68\std{14.29}
&~60.66\std{11.42}\\
ASAP
&28.30\std{0.72}
&25.06\std{0.37}
&87.84\std{2.77}
&71.56\std{3.76}\\
\midrule
D-VAE
&32.64\std{0.17}
&27.08\std{0.39}
&\underline{99.90\std{0.02}}
&\underline{99.78\std{0.01}}\\
\textbf{\ourmodel}
&\textbf{34.41\std{0.38}}
&\textbf{29.11\std{0.44}}
&\textbf{99.93\std{0.01}}
&\textbf{99.86\std{0.04}}\\
\bottomrule
\end{tabular} 
\end{small}
\end{table}

\begin{table}[t]
\caption{Predictive performance of latent representations for datasets \na and \bn.} 
\label{tab:results-nabn}
\centering
\begin{small}
\begin{tabular}{lcccc}
\toprule
&\multicolumn{2}{c}{\na}&\multicolumn{2}{c}{\bn}\\
\cmidrule(lr){2-3}\cmidrule(lr){4-5}
Model & RMSE $\downarrow$ & Pearson's $r$ $\uparrow$ & RMSE $\downarrow$ & Pearson's $r$ $\uparrow$\\
\midrule
S-VAE
&0.521\std{0.002}
&0.847\std{0.001}
&0.499\std{0.006}
&0.873\std{0.002}\\
GraphRNN
&0.579\std{0.002}
&0.807\std{0.001}
&0.779\std{0.007}
&0.634\std{0.002}\\
GCN 
&0.482\std{0.003}
&0.871\std{0.001}
&0.599\std{0.006}
&0.809\std{0.002}\\
DeepGMG 
&0.478\std{0.002}
& 0.873\std{0.001}
&0.843\std{0.007}
&0.555\std{0.003}\\
D-VAE
&\underline{0.375\std{0.003}}
&\underline{0.924\std{0.001}}
&\underline{0.281\std{0.004}}
&\underline{0.964\std{0.001}}\\
\textbf{\ourmodel}
&\textbf{0.264\std{0.004}}
&\textbf{0.964\std{0.001}}
&\textbf{0.122\std{0.004}}
&\textbf{0.993\std{0.000}}\\
\bottomrule
\end{tabular} 
\end{small}
\end{table}

\textbf{Prediction performance, token prediction (TOK), Table~\ref{tab:results-ogb}.}
The general trend is the same across the full dataset and the 15\% subset. DAGNN performs the best. GAT achieves the second best result, surprisingly outperforming D-VAE (the third best).  Hence, using attention as aggregator during message passing benefits this task. On the 15\% subset, only DAGNN, GAT, and D-VAE match or surpass the \bltwo baseline. Note that not all ground-truth tokens are in the vocabulary and thus the best achievable F1 is 90.99. Even so, all methods are far from reaching this ceiling performance. Furthermore, although most of the MPNN models (middle section of the table) use as many as five layers for message passing, the generally good performance of DAGNN and D-VAE indicates that DAG architectures not restricted by the network depth benefit from the inductive bias.

\textbf{Prediction performance, length of longest path (LP), Table~\ref{tab:results-ogb}.}
This analytical task interestingly reveals that many of the findings for the TOK task do not directly carry over. DAGNN still performs the best, but the second place is achieved by D-VAE while GAT lags far behind. The unsatisfactory performance of GAT indicates that attention alone is insufficient for DAG representation learning. The hierarchical pooling methods also perform disappointingly, showing that ignoring nodes may modify important properties of the graph (in this case, the longest path). It is worth noting that DAGNN and D-VAE achieve nearly perfect accuracy. This result corroborates the theory of \cite{Xuiclr20}, who state that when the inductive bias is aligned with the reasoning algorithm (in this case, path tracing), the model learns to reason more easily and achieves better sample efficiency.

\textbf{Prediction performance, scoring the DAG, Table~\ref{tab:results-nabn}.}
On \na and \bn, DAGNN also outperforms D-VAE, which in turn outperforms the other four baselines (among them, DeepGMG works the best on \na and S-VAE works the best on \bn, consistent with the findings of~\citet{ZhangJCGC19-dvae}.) While D-VAE demonstrates the benefit of incorporating the DAG bias, DAGNN proves the superiority of its architectural components, as will be further verified in the subsequent ablation study.


%
\definecolor{bblue}{HTML}{4F81BD}
\definecolor{rred}{HTML}{C0504D}
\definecolor{ggreen}{HTML}{9BBB59}
\definecolor{ppurple}{HTML}{9F4C7C}
\definecolor{test}{HTML}{43A047}
\definecolor{cdvae}{HTML}{7FB3D5 } 
\begin{figure}[t]
  \centering
  \begin{tikzpicture}
    \begin{axis}[
        width  = 0.355*\textwidth,
        height = 4.45cm, 
        major x tick style = transparent,
        ybar,
        bar width=10pt,
        ymajorgrids = true,
        ylabel = {Avg Time / Epoch (min)},
        symbolic x coords={\ogb},
        xtick = data,
        scaled y ticks = false,
        label style={font=\tiny},
        tick label style={font=\small},
        enlarge x limits=0.55, 
        ybar=0pt, 
        ybar=2*\pgflinewidth,
        legend cell align=left, 
        area legend,
        ymode=log,
        legend style={font=\tiny,
          cells={anchor=west}, 
          at={(1.01,0)},
          anchor=south west,
          column sep=.2pt
        }
      ]  
      \addplot[style={bblue,fill=bblue,mark=none}]
      coordinates {(\ogb,3.69)};
      \addplot[style={rred,fill=rred,mark=none}]
      coordinates {(\ogb,4.25)};
      \addplot[style={ggreen,fill=ggreen,mark=none}]
      coordinates {(\ogb,4.16)};
      \addplot[style={ppurple,fill=ppurple,mark=none}]
      coordinates {(\ogb,3.41)};
      \addplot[style={orange,fill=orange,mark=none}]
      coordinates {(\ogb,5.75)};
      \addplot[style={test,fill=test,mark=none}]
      coordinates {(\ogb,5.27)};
      
      
      \addplot[style={cdvae,fill=cdvae,mark=none}]
      coordinates {(\ogb,96.31)};

      \addplot[style={magenta,fill=magenta,mark=none}]
      coordinates {(\ogb,102.1)};


      \legend{GCN,GIN,GAT,GG-NN,SAGPool,ASAP,D-VAE,\ourmodel}
    \end{axis}
  \end{tikzpicture}\quad~%
  \begin{tikzpicture}
    \begin{axis}[
        width  = 0.3*\textwidth,
        height = 4.4cm, 
        major x tick style = transparent,
        ybar,
        bar width=10pt,
        ymajorgrids = true,
        symbolic x coords={\na},
        xtick = data,
        scaled y ticks = false,
        label style={font=\small},
        tick label style={font=\small},
        log number format basis/.code 2 args={
          $#1^{\pgfmathparse{#2-2}\pgfmathprintnumber[/pgf/number format/.cd]\pgfmathresult}$
        }, 
        ymode=log,
        enlarge x limits=0.55, 
        ybar=0pt, 
        ybar=2*\pgflinewidth,
        legend cell align=left, 
        area legend,
        legend style={font=\tiny,
          cells={anchor=west}, 
          at={(1.01,0)},
          anchor=south west,
          column sep=.2pt
        }
      ]
      \addplot[style={bblue,fill=bblue,mark=none}]
      coordinates {(\na,8)};
      
      \addplot[style={rred,fill=rred,mark=none}]
      coordinates {(\na,10)};
      
      \addplot[style={ggreen,fill=ggreen,mark=none}]
      coordinates {(\na,526)};

      \addplot[style={ppurple,fill=ppurple,mark=none}]
      coordinates {(\na,4238)};
      
      \addplot[style={cdvae,fill=cdvae,mark=none}]
      coordinates {(\na,472)};

      \addplot[style={magenta,fill=magenta,mark=none}]
      coordinates {(\na,499)};

      \legend{S-VAE,GraphRNN,GCN,DeepGMG,D-VAE,\ourmodel}
    \end{axis}
  \end{tikzpicture}
\caption{Average training time per epoch, on logarithmic scale. Standard deviation is negligible.}
\label{fig:times}
\end{figure}

\textbf{Time cost, Figure~\ref{fig:times}.}
The added expressivity of DAGNN comes with a tradeoff: the sequential processing of the topological batches requires more time than does the concurrent processing of all graph nodes, as in MPNNs. Figure~\ref{fig:times} shows that such a trade-off is innate to DAG architectures, including the D-VAE encoder. Moreover, the figure shows that, when used as a component of a larger architecture (autoencoder), the overhead of DAGNN may not be essential. For example, in this particular experiment, DeepGMG (paired with the S-VAE encoder) takes an order of magnitude more time than does DAGNN (paired with the D-VAE decoder). Most importantly, not reflected in the figure is that DAGNN learns better and faster at larger learning rates, leading to fewer learning epochs. For example, DAGNN reaches the best performance at epoch 45, while D-VAE at around~200.


\begin{table}[t]
\caption{Ablation results.}
\label{tab:ablation}
\centering
\begin{small}
\begin{tabular}{lcccccc}
\toprule
&\multicolumn{1}{c}{TOK-15}&\multicolumn{1}{c}{LP-15}&\multicolumn{2}{c}{\na}&\multicolumn{2}{c}{\bn}\\
\cmidrule(lr){2-2}\cmidrule(lr){3-3}\cmidrule(lr){4-5}\cmidrule(lr){6-7}
Configuration & F1 $\uparrow$ & Acc $\uparrow$ &RMSE $\downarrow$ & Pearson's $r$ $\uparrow$ & RMSE $\downarrow$ & Pearson's $r$ $\uparrow$\\
\midrule
\textbf{\ourmodel}
&\textbf{29.11\std{0.44}}
&\underline{99.86\std{0.04}}
&\textbf{0.264\std{0.004}}
&\textbf{0.964\std{0.001}}
&\underline{0.122\std{0.004}}
&\underline{0.993\std{0.000}}\\
Gated-sum aggr. 
&24.98\std{0.45}
&\textbf{99.88\std{0.02}}
&0.451\std{0.002}
&0.887\std{0.001}
&0.486\std{0.005}
&0.878\std{0.001}\\
Single layer		
&28.39\std{0.80}
&99.74\std{0.10}
&\underline{0.277\std{0.003}}
&\underline{0.960\std{0.001}}
&0.324\std{0.008}
&0.950\std{0.001}\\
FC layer
&26.08\std{0.80}
&{99.85\std{0.02}}
&0.280\std{0.004}
&0.959\std{0.001}
&0.362\std{0.002}
&0.934\std{0.001}
\\
Pool all nodes
&28.40\std{0.08}
&99.78\std{0.05}
&0.302\std{0.002}
&0.952\std{0.001}
&\textbf{0.098\std{0.003}}
&\textbf{0.996\std{0.001}}\\
W/o edge attr.
&\underline{28.85\std{0.24}}
&99.82\std{0.03}
&-&-&-&-\\
\bottomrule
\end{tabular}
\end{small}
\end{table}

\textbf{Ablation study, Table~\ref{tab:ablation}.} 
While the D-VAE encoder performs competitively owing similarly to the incorporation of the DAG bias, what distinguishes our proposal are several architecture components that gain further performance improvement. In Table~\ref{tab:ablation}, we summarize results under the following cases: replacing attention in the aggregator by gated sum; reducing the multiple layers to one; replacing the GRUs by fully connected layers; modifying the readout by pooling over all nodes; and removing the edge attributes. One observes that replacing attention generally leads to the highest degradation in performance, while modifying other components yields losses too. There are two exceptions. One occurs on LP-15, where gated-sum aggregation surprisingly outperforms attention by a tight margin, considering the standard deviation. The other occurs on the modification of the readout for the \bn dataset. In this case, a Bayesian network factorizes the joint distribution of all variables (nodes) it includes. Even though the DAG structure characterizes the conditional independence of the variables, they play equal roles to the BIC score and thus it is possible that emphasis of the target nodes adversely affects the predictive performance. In this case, pooling over all nodes appears to correct the overemphasis.

\begin{table}[t]
\caption{DAGNN results for different numbers of layers.} 
\label{tab:ablation2}
\centering
\begin{small}
\begin{tabular}{lcccccc}
\toprule
&\multicolumn{1}{c}{TOK-15}&\multicolumn{1}{c}{LP-15}&\multicolumn{2}{c}{\na}&\multicolumn{2}{c}{\bn}\\
\cmidrule(lr){2-2}\cmidrule(lr){3-3}\cmidrule(lr){4-5}\cmidrule(lr){6-7}
\# Layers & F1 $\uparrow$ &Acc $\uparrow$ & RMSE $\downarrow$ & Pearson's $r$ $\uparrow$ & RMSE $\downarrow$ & Pearson's $r$ $\uparrow$\\
\midrule
1
&28.39\std{0.80}
&99.74\std{0.10}
&{0.277\std{0.003}}
&{0.960\std{0.001}}
&0.324\std{0.008}
&0.950\std{0.001}\\
2	
&\textbf{29.11\std{0.44}}
&\textbf{99.86\std{0.04}}
&\underline{0.264\std{0.004}}
&\underline{0.964\std{0.001}}
&\textbf{0.122\std{0.004}}
&\textbf{0.993\std{0.000}}\\
3	
&\underline{28.96\std{0.27}}
&\underline{99.81\std{0.06}}
&\textbf{0.260\std{0.004}}
&\textbf{0.965\std{0.001}}
&\underline{0.129\std{0.011}}
&\underline{0.993\std{0.001}}
\\
4
&28.91\std{0.43}
&99.78\std{0.04}
&0.265\std{0.004}
&0.963\std{0.001}
&\underline{0.129\std{0.014}}
&\underline{0.993\std{0.002}}
\\
\bottomrule
\end{tabular}
\end{small}
\end{table}

\begin{wrapfigure}{r}{0.35\textwidth}
\centering
\begin{tikzpicture}
\begin{axis}[
width  = 0.35*\textwidth,
xtick={1,...,10},
xlabel = {\# Layers}, 
ylabel = {F1},
label style={font=\small},
tick label style={font=\small},
%
        ]
\addplot [color=blue, mark=o,]
 plot [error bars/.cd, y dir = both, y explicit]
 table[x =x, y =y, y error =ey]{sec/dagnn_ogb15_lay.txt};
\end{axis}
\end{tikzpicture}
\caption{Extending Table~\ref{tab:ablation2} with further layers on TOK-15.}
\label{fig:my_label}
\centering
\includegraphics[width=.85\linewidth,trim={0 1cm 0 0},clip]{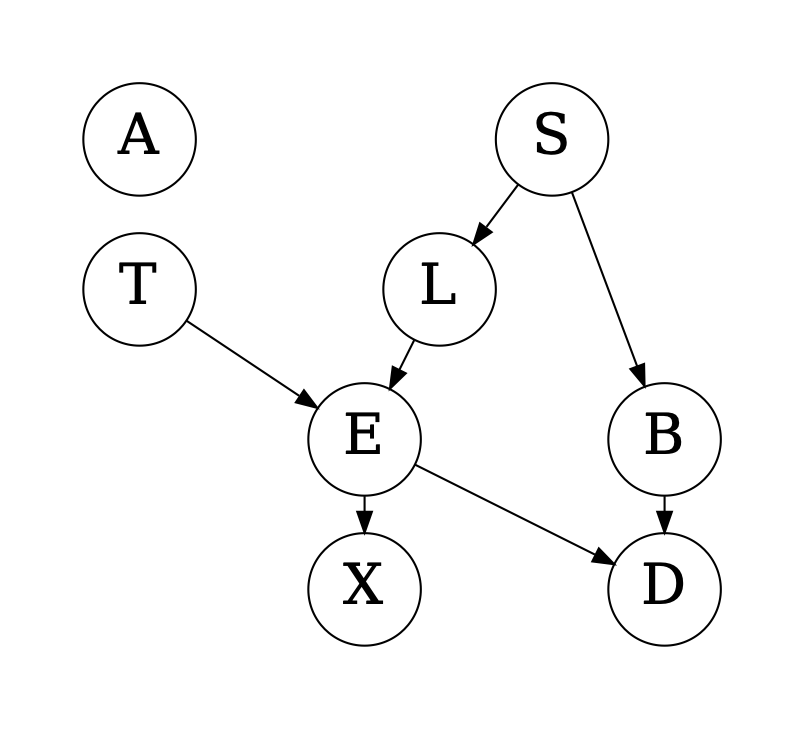}%
\caption{The Bayesian network identified by using Bayesian optimization over the latent space encoded by DAGNN.}%
\label{fig:bngraph}%
\end{wrapfigure}

\textbf{Sensitivity analysis, Table~\ref{tab:ablation2} and Figure~\ref{fig:my_label}.}
It is well known that MPNNs often achieve best performance with a small number of layers, a curious behavior distinct from other neural networks. It is important to see if such a behavior extends to DAGNN. In Table~\ref{tab:ablation2}, we list the results for up to four layers. One observes that indeed the best performance occurs at either two or three layers. In other words, one layer is insufficient (as already demonstrated in the ablation study) and more than three layers offer no advantage. We further extend the experimentation on TOK-15 with additional layers and plot the results in Figure~\ref{fig:my_label}. The trend corroborates that the most significant improvement occurs when going beyond a single layer. It is also interesting to see that a single layer yields the highest variance subject to randomization.

\textbf{Structure learning, Figure~\ref{fig:bngraph}.}
For an application of DAGNN, we extend the use of the \bn dataset to learn the Bayesian network for the Asia data. In particular, we take the Bayesian optimization approach and optimize the BIC score over the latent space of DAGs. We use the graphs in \bn as pivots and encode every graph by using DAGNN. The optimization yields a DAG with BIC score $-11107.29$ (see Figure~\ref{fig:bngraph}). This DAG is almost the same as the ground truth (see Figure 2 of~\citet{asia}), except that it does not include the edge from ``visit to \textbf{A}sia?'' to ``\textbf{T}uberculosis?''. It is interesting to note that the identified DAG has a higher BIC score than that of the ground truth, $-11109.74$. Furthermore, the BIC score is also much higher than that found by using the D-VAE encoder, $-11125.75$ \citep{ZhangJCGC19-dvae}. This encouraging result corroborates the superior encoding quality of DAGNN and the effective use of it in downstream tasks.

\section{Conclusions}
\label{sec:conclusions}

We have developed DAGNN, a GNN model for a special yet widely used class of graphs---DAGs. It incorporates the partial ordering entailed by DAGs as a strong inductive bias towards representation learning. With the blessing of this inductive bias, we demonstrate that DAGNNs outperform MPNNs on several representative datasets and tasks. Through ablation studies, we also show that the DAGNN model is well designed, with several components serving as crucial contributors to the performance gain over other models that also incorporate the DAG bias, notably, D-VAE. Furthermore, we theoretically study a batching technique that yields maximal parallel concurrency in processing DAGs and prove that DAGNN is permutation invariant and injective.

\subsubsection*{Acknowledgments}
This work is supported in part by DOE Award DE-OE0000910. Most experiments were conducted on the Satori cluster (\url{satori.mit.edu}).

\bibliography{ref}
\bibliographystyle{iclr2021_conference}

\clearpage
\appendix
\section{Proofs}
\begin{proof}[Proof of Theorem~\ref{thm:batching}]
  Let $(v_1,v_2,\ldots,v_d)$ be a longest path of the DAG. The number of batches must be at least $d$, because otherwise there exists a batch that contains at least two nodes on this path, violating Property (ii). On the other hand, given the partitioning, according to Property (iii), one may trace a directed path, one node from each batch, starting from the last one. The longest path must be at least that long. In other words, the number of batches must be at most the number of nodes on the longest path. Hence, these two numbers are equal. The consequence stated by the theorem straightforwardly follows.
\end{proof}

\begin{proof}[Proof of Theorem~\ref{thm:invariance}]
  We first show that $h_v^{\ell}$ is invairant to the indexing of $v$ by double induction on $\ell$ and $v$. The base case is $\ell=1$ and $v\in\Bmc_0$. In this case, $m_v^1=G^1(\emptyset,h_v^0)=0$ is invairant to the indexing of $v$. Then, $h_v^1=F^1(h_v^0,m_v^1)$ is, too. In the induction, if for all $\ell'<\ell$ and all $v'$, and for $\ell'=\ell$ and $v'\in\Bmc_0\cup\cdots\cup\Bmc_{i-1}$, $h_{v'}^{\ell'}$ is invairant to the indexing of $v'$, then for $\ell'=\ell$ and $v\in\Bmc_i$, $m_v^{\ell}=G^{\ell}(\{h_u^{\ell} \mid u\in\Pmc(v)\},h_v^{\ell-1})$ and $h_v^{\ell}=F^{\ell}(h_v^{\ell},m_v^{\ell})$ are both invairant to the indexing of $v$. Thus, by induction, for all $\ell'=\ell$ and all $v$, $h_v^{\ell'}$ is invairant to the indexing of $v$. Then, by an outer induction, for all $\ell$ and all $v$, $h_v^{\ell}$ is invairant to the indexing of $v$.

  Therefore, $h_{\Gmc} = R(\{h_v^{\ell},\, \ell=0,1,\ldots,L,\, v\in\Tmc\})$ is invairant to the indexing of the nodes in $\Tmc$ and thus of the entire node set.
\end{proof}

\begin{proof}[Proof of Corollary~\ref{cor:invariance}]
  The function $G^{\ell}$ is invariant to node indexing because it is a weighted sum of the elements in its first argument, $\{h_u^{\ell}\}$, whereas the weights are parameterized by using the same parameter $w_2^{\ell}$ for these elements.

  The function $F^{\ell}$ is invariant to node indexing because its two arguments are clearly distinguished.

  The function $R$ is invariant to node indexing because the FC layer applies to the pooling result of $h_v^{\ell}$ for a fixed set of $v$.
\end{proof}

\begin{proof}[Proof of Theorem~\ref{thm:injection}]
  Suppose two graphs $\Gmc$ and $\Gmc'$ have the same representation $h_{\Gmc}=h_{\Gmc'}$. Then, from the function $R$, they must have the same target set $\Tmc$ and same node representations $h_v^{\ell}$ for all nodes $v\in\Tmc$ and all layers $\ell$. In particular, for the last layer $\ell=L$, from the functions $F^L$ and $G^L$, each of these nodes, $v$, from the two graphs must have the same set of direct predecessors $\Pmc(v)$, each element $u$ of which have the same representation $h_u^L$ across graphs. By backward induction, the two graphs must have the same node set $\Vmc$ and edge set $\Emc$. Moreover, for each node $v\in\Vmc$, the last-layer representation $h_v^L$ must be the same.

  Furthermore, from the injection property of $F^{\ell}$, if a node $v$ shares the same node representation $h_v^{\ell}$ across graphs, then its past-layer representation $h_v^{\ell-1}$ must also be the same across graphs. A backward reduction traces back to the initial representation $h_v^0$, which concludes that the two graphs must have the same set of input node features $\Xmc$.
\end{proof}

\section{Dataset Details}\label{app:datasets}
\textbf{\ogb.}
The \ogb dataset was recently included in the Open Graph Benchmark (OGB) \citep[Section 6.3]{ogb}. It contains 452,741 Python method definitions extracted from thousands of popular Github repositories. The method definitions are represented as DAGs by augmenting the abstract syntax trees with edges connecting the sequence of source code tokens. Hence, there are two types of edges. 
The min/avg/max numbers of nodes in the graphs are 11/125/36123, respectively.
We use the node features provided by the dataset, including node type, attributes, depth in the AST, and pre-order traversal index. 

The task suggested by \citet{ogb} is to predict the sub-tokens forming the method name, also known as ``code summarization''. 
The task is considered a proxy measure of how well a model captures the code semantics \citep{AllamanisBDS18}. 
We additionally consider the task of predicting the length of the longest path in the graph. We treat it as a 275-way classification because the maximum length is 275. The distribution of the lengths/classes is shown in Appendix~\ref{app:lp}. To avoid triviality, for this task we remove the AST depth from the node feature set.

We adopt OGB's \emph{project split}, whose training set consists of Github projects not seen in the validation and test sets. We also experiment with a subset of the data, \ogbfif, which contains only randomly chosen 15\% of the \ogb training data. Validation and test sets remain the same. 

In addition to \ogb, we further experiment with two DAG datasets, \na and \bn, used by \citet{ZhangJCGC19-dvae} for evaluating their model D-VAE. To compare with the results reported in the referenced work, we focus on the predictive performance of the latent representations of the DAGs obtained from autoencoders. We adopt the given 90/10 splits. 

\textbf{Neural architectures (\na).}
This dataset is created in the context of neural architecture search. It contains 19,020 neural architectures generated from the ENAS software \citep{PhamGZLD18-enas}. Each neural architecture has 6 layers (i.e., nodes) sampled from 6 different types of components, plus an input and output layer. The input node vectors are one-hot encodings of the component types. The weight-sharing accuracy \citep{PhamGZLD18-enas} (a proxy of the true accuracy) on CIFAR-10 \citep{Krizhevsky09learningmultiple} is taken as performance measure. 
Details about the generation process can be found in \citet[Appendix~H]{ZhangJCGC19-dvae}.

\textbf{Bayesian networks (\bn).}
This dataset contains 200,000 random 8-node Bayesian networks generated by using the R package bnlearn \citep{bnlearn}. The Bayesian Information Criterion (BIC) score is used to measure how well the DAG structure fits the Asia dataset \citep{asia}. The input node vectors are one-hot encodings of the node indices according to topological sort. See \citet[Appendix~I]{ZhangJCGC19-dvae} for further details.

\section{Baseline Details}\label{app:baselines}
\textbf{Baselines for \ogb.}
We use three basic measures to set up baseline performance, two for token prediction and one for the longest path task.
(1)~\textbf{\blone}: This method uses the attribute of the second node of the graph as prediction. We observe that the second node either contains the function name, if the token occurs in the vocabulary (which is not always the case because some function names consist of multiple words), or contains ``None''.
(2)~\textbf{\bltwo}: This method pretends that it knows the ground-truth tokens but predicts only those occurring in the graph. One would expect that a learning model may be able to outperform this method if it learns the associations of tokens outside the current graph.
(3)~\textbf{\blfou}: This method always predicts the majority length seen in the validation set.

Additionally, we compare with multiple GNN models. Some of them are the GNN implementations offered by OGB: \textbf{GCN}, \textbf{GIN}, \textbf{GCN-VN}, and \textbf{GIN-VN}. The latter two are extensions of the first two by including a \emph{virtual node} (i.e., an additional node that is connected to all nodes in the graph). Note that the implementations do not strictly follow the architectures described in the original papers \citep{kip2017-GCN,XuHLJ19-gin}. In particular, edge types are incorporated and inverse edges are added for bidirectional message passing. 

Since our model features attention mechanisms, we include \textbf{GAT} \citep{velickovic2018gat} and \textbf{GG-NN} \citep{LiTBZ15-ggnn} for comparison. We also include two representative hierarchical pooling approaches, which use attention to determine node pooling: \textbf{SAGPool} \citep{LeeLK19-sagpool} and \textbf{ASAP} \citep{RanjanST20-asap}. Lastly, we compare with the encoder of \textbf{D-VAE} \citep[Appendix E, F]{ZhangJCGC19-dvae}. 

\textbf{Baselines for \na and \bn.}
Over \na and \bn, we consider \textbf{D-VAE} and 
the baselines in \citet[Appendix~J]{ZhangJCGC19-dvae}. \textbf{S-VAE} \citep{svae} applies a standard GRU-based RNN variational autoencoder to the topologically sorted node sequence, with node features augmented by the information of incoming edges, and decodes the graph by generating an adjacency matrix. \textbf{GraphRNN} \citep{grnn} by itself serves as a decoder; we pair it with S-VAE encoder. \textbf{GCN} uses a GCN encoder while takes the decoder of D-VAE. \textbf{DeepGMG} \citep{deepgmg} similarly uses a GNN-based encoder but employs its own decoder (which is similar to the one in D-VAE).
Note that all these baselines are autoencoders and our objective is to compare the performance of the latent representations.

\section{Model Configurations and Training}\label{app:config}

\subsection{Experiment Protocol and Hyperparameter Tuning}
Our evaluation protocols and procedures closely follow those of \citet{ogb,ZhangJCGC19-dvae}. 
For \ogb, we only changed the following. We used 5-fold cross validation due to the size of the dataset and the number of baselines for comparison. Since we compared with vast kinds of models in addition to the OGB baselines, we swept over a large range of learning rates and, for each model, picked the best from the set $\{$1e-4, 5e-4, 1e-3, 15e-4, 2e-3, 5e-3, 1e-2, 15e-3$\}$ based on performance on \ogbfif. 
We stopped training when the validation metric did not improve further under a patience of 20 epochs, for all models but D-VAE and \ourmodel. For the latter two, we used a patience of 10. Moreover, for these two models we used gradient clipping (at 0.25) due to the recurrent layers and a batch size of 80.
Note that OGB uses 10-fold cross validation with a fixed learning rate of 1e-3, a fixed epoch number 30, and a batch size 128. 

For \na and \bn, we followed the exact training settings of \citet[Appendix K]{ZhangJCGC19-dvae}. For DAGNN, we started the learning rate scheduler at 1e-3 (instead of 1e-4) and stopped at a maximum number of epochs, 100 for \na and 50 for \bn (instead of 300 and 100, respectively). 
We also trained a sparse Gaussian process (SGP) \citep{sgp} as the predictive model, as described in \citet[Appendix L]{ZhangJCGC19-dvae}, to evaluate the performance of the latent representations. The prediction results were averaged over 10 folds.

For the Bayesian network learning experiment we similarly took over the settings of \citet{ZhangJCGC19-dvae}, running ten rounds of Bayesian optimization.

\subsection{Baseline Models}
All models were implemented in PyTorch \citep{pytorch}. For \ogb, we used the GCN and GIN models provided by the benchmark. We implemented a GAT model as described in \citet{velickovic2018gat} 
and GG-NN in \citet{LiTBZ15-ggnn}. We used the SAGPool implementation of \citet{LeeLK19-sagpool} and ASAP from the Pytorch Geometric Benchmark Suite \url{https://github.com/rusty1s/pytorch_geometric/tree/master/benchmark}. All these models were implemented using PyTorch Geometric \citep{fey-pyg}. We used the parameters suggested in OGB (e.g., 5 GNN layers, with embedding and hidden dimension 300), with the exception of ASAP where we used 3 instead of 5 layers due to memory constraints.

Since the D-VAE implementation does not support topological batching as we do, and also because of other miscellaneous restrictions (e.g., a single source node and target node), we reimplement D-VAE by using our DAGNN codebase. The reimplementation reproduces the results reported by \citet{ZhangJCGC19-dvae}. See Appendix~\ref{app:reimplement} for more details.

\subsection{\ourmodel Implementation}
For \ourmodel, we used hidden dimension 300. 
As suggested by OGB, we used independent linear classifiers to predict sub-tokens at each position of the sub-token sequence. Similarly, we used a linear classifier to predict the length of the longest path.

For the \na and \bn datasets, we took the baseline implementations as well as training and evaluation procedures from \citet{ZhangJCGC19-dvae}. In particular, we used the corresponding configuration of D-VAE for the \bn dataset. For \ourmodel, we used the same hidden dimension 501 and adapted the decoder of D-VAE (by replacing the use of D-VAE encoder in part of the decoding process with our encoder). Additionally, we used bidirectional processing for token prediction over \ogb and for the experiment over \bn. Since it did not offer improvement in performance for the longest path length prediction and for the experiment over \na but consumed too much time, for these cases we used unidirectional processing.

\begin{figure}[t]
  \centering
  \includegraphics[width=.45\linewidth]{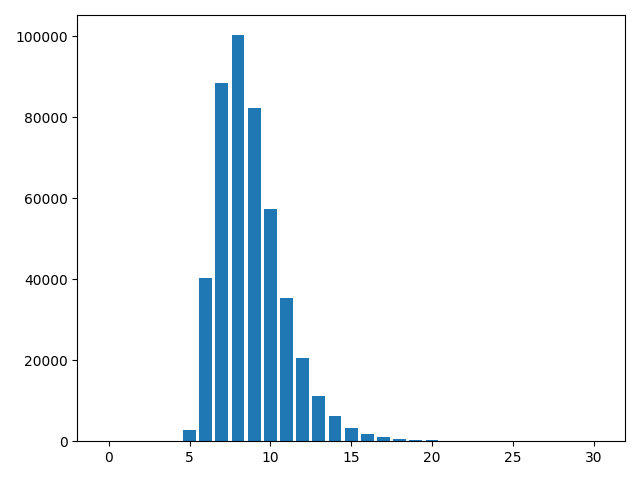}\qquad%
  \includegraphics[width=.45\linewidth]{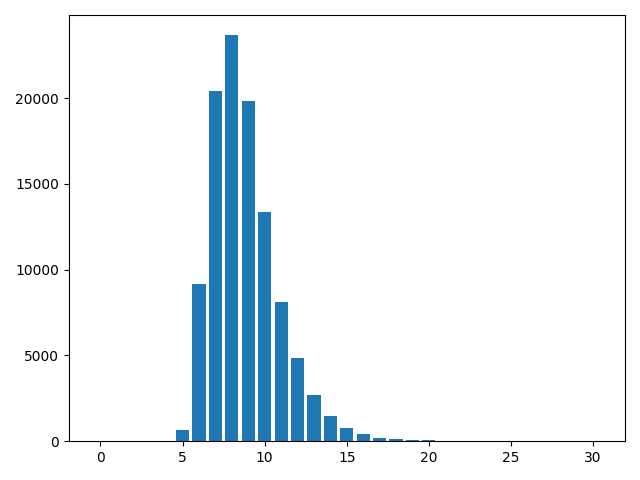}
  \caption{Distribution of the longest path lengths, for \ogb (left) and \ogbfif (right). To improve readability, we ignored a tiny amount of graphs whose longest path length $>$ 30. There are 58 such graphs in \ogb and 21 in \ogbfif.}
  \label{fig:lps}
\end{figure}

\section{Details on the Longest Path Experiment}\label{app:lp}

We observe that for the MPNN baselines, the longest path results shown in Table~\ref{tab:results-ogb} are much worse on the 15\% subset than on the full dataset. We speculate whether the poorer performance is caused by purely the size of training data, or additionally by the discrepancy of data distributions. Figure~\ref{fig:lps} shows that the data distributions are rather similar. Hence, we conclude that the degrading performance of MPNNs on a smaller training set is due to their low sample efficiency, in contrast to DAG architectures (D-VAE and DAGNN) that perform similarly on both the full set and the subset.

\section{Reimplementation of D-VAE}\label{app:reimplement}

The original D-VAE implementation processes nodes sequentially and thus is time consuming. Therefore, we reimplement D-VAE by using our DAGNN codebase, in particular supporting topological batching. Table~\ref{tab:results-nabn-reimpl} shows that our reimplementation reproduces closely the results obtained by the original D-VAE implementation.

\begin{table}[ht]
\caption{Predictive performance of latent DAG representations for \na and \bn. Comparison of the original implementation and our reimplementation.
} 
\label{tab:results-nabn-reimpl}
\begin{center}
\begin{small}
\begin{tabular} [t] {l cccc}
\toprule
&\multicolumn{2}{c}{\na}&\multicolumn{2}{c}{\bn}\\
\cmidrule(lr){2-3}\cmidrule(lr){4-5}
Model & RMSE&Pearson's $r$& RMSE&Pearson's $r$\\
\midrule
D-VAE (orig)
&{0.375\std{0.003}}
&{0.924\std{0.001}}
&{0.281\std{0.004}}
&{0.964\std{0.001}}\\
D-VAE (ours)
&0.375\std{0.004}
&0.925\std{0.001}
&0.219\std{0.003}
&0.977\std{0.000}\\
\bottomrule
\end{tabular} 
\end{small}
\end{center}
\end{table}


%
%

\section{Additional Ablation Results}\label{app:ablation}

As mentioend in the main text, bidirectional processing is optional; it does not necessarily improve over unidirectional. Indeed, Table~\ref{tab:bidir} shows that bidirectional works better on TOK-15 and \bn, but unidirectional works better on LP-15 and \na. However, either way, \ourmodel outperforms all baselines reported in Table~\ref{tab:results-ogb} and~\ref{tab:results-nabn}, with only one exception: on LP-15, D-VAE performs worse than unidirectional but better than bidirectional.


\begin{table}[ht]
\caption{Bidirectional vs. unidirectional processing.}
\label{tab:bidir}
\centering
\begin{small}
\begin{tabular}{lcccccc}
\toprule
&\multicolumn{1}{c}{TOK-15}&\multicolumn{1}{c}{LP-15}&\multicolumn{2}{c}{\na}&\multicolumn{2}{c}{\bn}\\
\cmidrule(lr){2-2}\cmidrule(lr){3-3}\cmidrule(lr){4-5}\cmidrule(lr){6-7}
Bidirectional? & F1 $\uparrow$ & Acc $\uparrow$ & RMSE $\downarrow$ & Pearson's $r$ $\uparrow$ & RMSE $\downarrow$ & Pearson's $r$ $\uparrow$\\
\midrule
No
&28.44\std{0.19}
&\textbf{99.85\std{0.02}}
&\textbf{0.264\std{0.004}}
&\textbf{0.964\std{0.001}}
&0.146\std{0.035}
&0.992\std{0.001}
\\
Yes		
&\textbf{29.11\std{0.44}}
&99.50\std{0.22}
&0.324\std{0.003}
&0.945\std{0.001}
&\textbf{0.122\std{0.004}}
&\textbf{0.993\std{0.000}}\\
\bottomrule
\end{tabular}
\end{small}
\end{table}

\end{document}